\newtheorem{theorem}{Theorem}
\def\Zset{{\mathcal Z}}
\def\Iset{{\mathcal I}}
\def\Mset{{\mathcal M}}
\def\Gset{{\mathcal G}}
\def\Pset{{\mathcal P}}
\newcommand\Pai{\textit{PA}}
\newcommand\bs[1]{\boldsymbol{#1}}
\DeclareMathOperator*{\argmax}{argmax}
\journal{NeuroImage}
\def\ps@pprintTitle{%
 \let\@oddhead\@empty
 \let\@evenhead\@empty
 \def\@oddfoot{}%
 \let\@evenfoot\@oddfoot}
\begin{document}

\begin{frontmatter}

\title{Learning Bayesian Networks with \\Incomplete Data by Augmentation}

\author[mainaddress01]{Tameem Adel\corref{mycorrespondingauthor}}
\cortext[mycorrespondingauthor]{Corresponding author}
\ead{tameem.hesham@gmail.com}

\author[mainaddress03]{Cassio P.\ de Campos}

\address[mainaddress01]{Machine Learning Lab, University of Amsterdam}
\address[mainaddress03]{EEECS, Queen’'s University Belfast}

\begin{abstract}
We present new algorithms for learning Bayesian networks from data with missing values using a data augmentation approach. An exact Bayesian network learning algorithm is obtained by recasting the problem into a standard Bayesian network learning problem without missing data. To the best of our knowledge, this is the first exact algorithm for this problem. As expected, the exact algorithm does not scale to large domains. We build on the exact method to create an approximate algorithm using a hill-climbing technique. This algorithm scales to large domains so long as a suitable standard structure learning method for complete data is available. We perform a wide range of experiments to demonstrate the benefits of learning Bayesian networks with such new approach.
\end{abstract}

\end{frontmatter}


\section{Introduction} \label{sec:intro}
Missing entries in real-world data exist due to various reasons. For instance, it can be due to damage of the device used to record feature values; a metal detector might fail to produce a signal denoting the existence of a metal due to a certain malfunction. Results can be incomplete in an industrial experiment due to mechanical breakdowns not necessarily related to the performed experiment~\citep{Little2014}. Recommendation data can have missing values since participants in the recommendation system did not rate all the available songs, films, books, etc. While data missingness in the above examples can mostly be assumed to be generated by a random process which depends only on the observed data, usually referred to as \emph{missing at random (MAR)}~\citep{Little2014,Rancoita2016}, this assumption might fail in other examples. People seeking for health insurance might refuse to give an answer to certain questions in order to reduce the costs, e.g. `do you smoke?', and in many cases this can be seen as an indication of one specific answer. In such cases we say that data are {\it missing not at random}, or MNAR (see for instance~\citep{broeck14}).

Given a dataset with categorical random variables, the Bayesian network structure learning problem refers to finding the best network structure (a directed acyclic graph, or DAG) according to a score function based on the data~\citep{Heckerman1995}. As well known, learning a Bayesian network from complete data is NP-complete~\citep{Chickering1996}, and the task becomes even harder with incomplete data. In spite of that, the problem of learning a Bayesian network from incomplete data by (an optimistic) augmentation belongs to the same complexity class, as we will show later on. Because of such result, we investigate and obtain a new exact algorithm for the problem, based on reformulating it into a standard structure learning without missing data. This is the first exact algorithm for the problem, to the best of our knowledge. In contrast to previous work, our algorithm performs both tasks, namely structure learning and data imputation, in a single shot rather than learning the Bayesian network and then dealing with the missing data, possibly in an iterative manner~\citep{Friedman1998,Rancoita2016}. 
Based on the optimization that is required to solve the problem and on the exact algorithm, we devise a hill-climbing approximate algorithm. The hill-climbing regards the completions of the missing values only, while the structure optimization is performed by any off-the-shelf algorithm for structure learning under complete data.

Most previous work to learn the structure of Bayesian networks from incomplete data has focused on MAR. The seminal algorithm in \cite{Friedman1998} introduced an iterative method based on the Expectation-Maximization (EM) technique, referred to as structural EM. Implementation of structural EM begins with an initial graph structure, followed by steps where the probability distribution of variables with missing values is estimated by EM, alternated with steps in which the expectation of the score of each neighbouring graph is computed. After convergence, the graph maximizing the score is chosen. Many other algorithms have used ideas from structural EM and deal separately with the missing values and the structure optimization using complete data~\citep{Borchani2006,Leray2005,Meila1998,Ramoni1997,Riggelsen2006,Riggelsen2005}. In \cite{Rancoita2016}, structures are learned from incomplete data using a structural EM whose maximization step is performed by an anytime method, and the `expectation' step imputes the missing values using expected means, or modes, of the current estimated joint distribution. By using modes in each iteration~\citep{Ramoni1997}, the EM method is sometimes called {\it hard} EM, and is close to our work. In some sense, we work with a global optimization version of hard EM. While this is not exactly considering data to be MNAR, such approach fits less the observed data and performs well for MNAR missing data when compared to structural EM, as we will empirically show. We emphasize that the actual missingness process is not disclosed to the methods and is not assumed to be somehow known, and that we are mainly interested in structure learning. Given the difficulties of structure learning itself, we assume that the underlying distribution is identifiable (in short terms, provided enough data are available, one could reconstruct such distribution, see for instance~\citep{Mohan2013}).

We perform experiments on a set of heterogeneous datasets. We base the evaluation on imputation accuracy in its pure form, as well as in the forms of classification accuracy and semi-supervised learning accuracy. Experiments show the improvements achieved by the proposed algorithms in all scenarios. Regarding the comparison between our exact and approximate methods, experiments suggest that accuracy levels achieved by the approximate algorithm are close to those achieved by the optimal learning algorithm, with the former being much faster and scalable.

\section{Bayesian Network Structure Learning}

Let ${\bf X}=(X_1,\ldots,X_m)$ refer to a vector of categorical random variables, taking values in $\textrm O_{{\bf X}}=\times_i \textrm O_{X_i}$, where $\textrm O_{{\bf X}}$ represents the Cartesian product of the state space, $\textrm O_{X_i}$, of each $X_i$. 
Denote by $\mathcal{D}$ an $n$-instance dataset where each instance
${\bf D}_u = (d_{u,1}, d_{u,2},\dots, d_{u,m})$ is such that $d_{u,i}$ is either an observed value $o_{u,i}\in O_{X_i}$ or a special symbol denoting the entry is missing.
Let ${\bf Z}_u$ denote a completion for variables with missing values in instance $u$ and $z_{u,i}$ for the missing value of $X_i$.

A Bayesian network, $\Mset$, is a probabilistic graphical model based on a structured dependency among random variables to represent a joint probability distribution in a compact and tractable manner. Here, it represents a joint probability distribution $\text{Pr}_\Mset$ over a collection of categorical random variables, ${\bf X}$. We define a Bayesian network as a triple $\Mset=(\Gset,{\bf X},\Pset)$, where $\Gset=(V_\Gset,E_\Gset)$ is a directed acyclic graph (DAG) with $V_\Gset$ a collection of $m$ nodes associated to the random variables ${\bf X}$ (a node per variable), and $E_\Gset$ a collection of arcs; $\Pset$ is a collection of conditional probabilities $\text{Pr}_\Mset(X_i|\textit{PA}_i)$ where $\textit{PA}_i$ denotes the parents of $X_i$ in the graph ($\textit{PA}_i$ may be empty), corresponding to the relations of $E_\Gset$. In a Bayesian network, the Markov condition states that every variable is conditionally independent of its non-descendants given its parents. This structure induces a joint probability distribution by the expression $\text{Pr}_\Mset(X_1,\dots,X_m) = \prod_i \text{Pr}_\Mset(X_i|\textit{PA}_i)$. We define $r_i\geq 2$ as the number of values in $\textrm O_{X_i}$,  i.e.\ $r_i = |O_{X_i}|$, and $r_{\Pai_i}$ as the number of possible realizations of the parent set, that is, $r_{\Pai_i}=\prod_{X_l\in \Pai_i} r_l$. Let $R=\max_i r_i$.

Given a complete dataset $\mathcal{D}$ with $n$ instances, the structure learning problem in Bayesian networks is to find a DAG $\Gset$ that maximizes a given score function, that is, we look for $\Gset^*=\argmax_{\Gset\in {\bs\Gset}} s_{\mathcal{D}}(\Gset)$, with ${\bs\Gset}$ the set of all DAGs over node set ${\bf X}$.
We consider here the score function $s_{\mathcal{D}}$ to be the Bayesian Dirichlet Equivalent Uniform (BDeu) criterion \citep{Buntine1991,Cooper1992} (other decomposable scores could be used too), so we have $s_{\mathcal{D}}(\Gset)=\sum_i s_{\mathcal{D}}(X_i,\textit{PA}_i)$.
We however have to deal with the missing part of the data, which we treat by completing the missing values in the best possible way (an optimistic completion):
\begin{align}
& (\Gset^*,\Zset^*)=\argmax_{\Gset\in {\bs\Gset},~\Zset\in {\bs\Zset}} s_{\mathcal{D}}(\Gset,\Zset) = \notag \\
& \argmax_{\Gset\in {\bs\Gset},~\Zset\in {\bs\Zset}} \sum_i s_{\mathcal{D}}(X_i,\textit{PA}_i;\textbf{Z}_{\{X_i\}\cup\textit{PA}_i})
\label{eq1}
\end{align}
\noindent where ${\bs\Zset}=\times_u O_{{\bf Z}_u}$ and $s_{\mathcal{D}}(\Gset,\Zset)$ is the score $s_{\mathcal{D}}(\Gset)$ evaluated for the complete data when its missing values are replaced by $\Zset$, while $s_{\mathcal{D}}(X_i,\textit{PA}_i;\textbf{Z}_{\{X_i\}\cup\textit{PA}_i})$ is the local score for a node $X_i$ with parent set $\textit{PA}_i$ (note that such computation only depends on the completion $\textbf{Z}_{\{X_i\}\cup\textit{PA}_i}$ of the involved variables). We refer to this optimization task as the structure learning problem by optimistic augmentation. It can be applied to MAR data, but we argue that it is particularly suitable to MNAR when compared to the standard techniques such as structural EM. From the optimization viewpoint, this can be seen as a global optimization approach to {\it hard} EM, since we complete the data with their mode, but we do it globally instead of in an iterative process such as EM. As well known, {\it hard} EM can be seen as a subcase of EM, since it is equivalent to allowing EM to use only {\it degenerate} mass functions in its expectation step. 

\begin{theorem}
The decision version associated to the structure learning problem by optimistic augmentation is NP-complete.
\end{theorem}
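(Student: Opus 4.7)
The plan is to split the claim into NP membership and NP-hardness, with membership being the more substantive of the two. The joint optimization in (\ref{eq1}) ranges over both a graph and a completion, which at first glance might suggest the problem sits at the second level of the polynomial hierarchy; I would dispel this by giving as a certificate a single pair $(\Gset, \Zset)$, where $\Gset$ is a DAG on the $m$ nodes (of size $O(m^2)$) and $\Zset$ records the chosen value of each missing cell (of total size $O(nm \log R)$). The certificate is therefore polynomial in the input. Because BDeu is decomposable, once the missing cells are filled in each local score $s_{\mathcal{D}}(X_i,\Pai_i;\mathbf{Z}_{\{X_i\}\cup\Pai_i})$ can be computed in polynomial time from the completed sufficient-statistic counts; summing these and comparing with the input threshold $k$ yields a polynomial-time verifier, placing the decision problem in NP.

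For NP-hardness I would reduce in one line from the decision version of complete-data Bayesian network structure learning under BDeu, which is NP-hard by \cite{Chickering1996}. The reduction is the identity map: any complete-data instance $(\mathcal{D}, k)$ is simultaneously an instance of the augmentation problem in which the set $\bs{\Zset}$ is a singleton (the empty completion), so (\ref{eq1}) collapses to $\argmax_{\Gset\in\bs{\Gset}} s_{\mathcal{D}}(\Gset)$ and the yes/no answer is preserved. Since the transformation is trivially polynomial, the augmentation problem inherits NP-hardness, and combined with membership this gives NP-completeness.

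The main obstacle I anticipate is not conceptual but arithmetic bookkeeping: for the decision version to be formally well posed, the (log-)BDeu scores and the threshold $k$ must be encodable and comparable in polynomially many bits. I would handle this in the standard way, working with a rational encoding of log-BDeu and noting that the additions and single comparison needed by the verifier remain in P. A secondary point worth addressing is that Chickering's original construction is typically stated for the BDe score; I would cite the well-known extension to BDeu from the structure-learning literature so that the reduction applies directly to the score used in this paper.
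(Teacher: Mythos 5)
Your proposal is correct and follows essentially the same route as the paper: membership via the certificate $(\Gset,\Zset)$ whose score is verifiable in polynomial time, and hardness by observing that the complete-data structure learning problem of \cite{Chickering1996} is the special case with no missing entries. Your extra remarks on score encoding and the BDe/BDeu distinction are reasonable refinements of details the paper leaves implicit, but they do not change the argument.
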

\begin{proof}
Hardness is obtained by realizing that this problem generalizes the structure learning problem without missing data, which is NP-hard~\citep{Chickering1996}. Pertinence in NP holds since given $\Gset$ and $\Zset$, the score function $s_{\mathcal{D}}$ can be computed in polynomial time.
\end{proof}

Since the problem is a combinatorial optimization over a discrete domain (both DAGs and completions of data are discrete entities), we could resort to enumerating all possible solutions. This is obviously infeasible for both: the number of DAGs grows super-exponentially in the number of variables and the number of completions grows exponentially in the number of missing values. 
We will now present an exact algorithm for the problem which transforms it into a standard structure learning problem, and later we modify the approach to perform approximate learning. In this respect, we define as a $t$-local optimal solution for Equation~\eqref{eq1}  a pair $(\Gset,\Zset)$ such that $s_D(\Gset,\Zset)\geq s_D(\Gset',\Zset')$ for all $\Gset'$ and all $\Zset'$ with $\textrm{HD}(\Zset,\Zset')\leq t$, where $\textrm{HD}$ is the Hamming distance, that is, $(\Gset,\Zset)$ is optimal with respect to any other pair whose completion of the data has at most $t$ elements different from $\Zset$. A global optimal solution is a $\infty$-local optimal solution.


\subsection{Optimal (Exact) Learning Algorithm}

We assume that a standard structure learning algorithm for complete data is available to us, which is based on the framework of two main optimizations: (i) parent set identification and (ii) structure optimization. Step (i) concerns building a list of candidate parent sets for each variable, while Step (ii) optimizes the selection of a parent set for each variable in a way to maximize the total score while ensuring that the graph is a DAG. This latter step can be tackled by exact or approximate methods~\citep{Bartlett2013,Scanagatta2015} (in our experiments we will employ an exact method such that we are sure that the quality of results is only affected/related to the proper treatment of the missing data, but for very large domains any approximate method could be used too).

The exact algorithm for solving Equation~\eqref{eq1} is based on modifying the parent set identification step. This step has no known polynomial-time solution if we do not impose a maximum number of parents~\citep{Koivisto2006b}, so we will assume that such a bound $k$ is given. We compute the candidate list by using one of the available approaches~\citep{Campos2011,Scanagatta2015} to guide the search, but for each candidate to be evaluated, the corresponding variables in the dataset might contain missing values.
The first part of the transformation is to create gadgets composed of some new artificial variables which will be related to the missing values and will enable the inclusion of all possible replacements of missing values by augmenting the original domain.

Over all the dataset, for each and every missing value, let us denote it by $(u,i)$ for sample $u$ and variable $X_i$, we include artificial variables $X_{(u,i),1},\ldots,X_{(u,i),r_i}$. Each $X_{(u,i),j}$ has two parent set candidates: 
(i) ${\bf X}\cup\{X_{(u,i),1+(j\textrm{ mod } r_u)}\}$ with score zero (assuming all other score values are negative, without loss of generality) and (ii) $\emptyset$ with score $-\lambda$, with $\lambda$ a large enough value (e.g. greater than the sum of all other absolute scores).
We further illustrate the idea via an example for variable $X_1$ with $r_1 = 3$: Assume $m=3$, $r_1=3$ and there is one missing value at $(u,1)$. An artificial variable is included for each possible completion $z_{u,1}$, resulting in a total of three new variables, $X_{(u,1),1}, X_{(u,1),2}, X_{(u,1),3}$. The following gadget, consisting of two parent set candidates per artificial variable, is added to the list of parent set scores (we know that only one parent set per variable will be chosen during the optimization phase later on):\\
\begin{align}
 &s(X_{({u,1}),1}, \{X_{({u,1}),2}, X_1, X_2, X_3 \}) = 0, \notag\\ 
 &s(X_{({u,1}),1}, \emptyset) = -\lambda, \notag \\
 &s(X_{({u,1}),2}, \{X_{({u,1}),3}, X_1, X_2, X_3 \}) = 0, \notag\\ 
 &s(X_{({u,1}),2}, \emptyset) = -\lambda, \notag \\
 &s(X_{({u,1}),3}, \{X_{({u,1}),1}, X_1, X_2, X_3 \}) = 0, \notag\\ 
&s(X_{({u,1}),3}, \emptyset) = -\lambda. \notag
\end{align}

According to this gadget, each artificial variable will either have no parent variables or all other original variables as well as one other artificial variable as its set of parents. The case with no parents leaves open the opportunity to choose the variable representing such completion as a potential parent for all original variables. In contrast, the cases with all variables as parents disables such completion from being chosen as a parent by the original variables, otherwise it would create a cycle. 
Due to including one artificial variable as a parent of the next artificial variable, at least one parent set among those with score zero cannot be chosen (otherwise a cycle is formed), and because they are all very good scores when compared to $-\lambda$, all but one will certainly be chosen. There is one such gadget per missing value in the original dataset, so we spend time $O(R\cdot m\cdot C)$, where $C$ is the number of missing values.

Finally, we return to the computation of the score for a given variable and parent set. Let $X_i$ be the variable of interest and $\textit{PA}_i=\{X_{i_1},\ldots,X_{i_q}\}$ for which the score must be evaluated. At this moment, we consider all possible completions $\textbf{Z}_{\{X_i\}\cup\textit{PA}_i}$ and compute the scores $s_{\mathcal{D}}(X_i,\textit{PA}_i;\textbf{Z}_{\{X_i\}\cup\textit{PA}_i})$ for each one of them. In order to reduce the problem to a standard structure learning without missing data, we must index these scores somehow. This is made possible via the new artificial variables:
\begin{align}
s_{\mathcal{D}}&(X_i,\textit{PA}_i;\textbf{Z}_{\{X_i\}\cup\textit{PA}_i}) = \notag \\ 
&s_{\mathcal{D}}(X_i,\textit{PA}_i\cup\{X_{(u,i),z_{u,j}}:~ z_{u,j}\in \textbf{Z}_{\{X_i\}\cup\textit{PA}_i}\}) \notag
\end{align}
\noindent that is, for each imputed missing value $z_{u,j}$ appearing for variable $X_i$ or $\textit{PA}_i$ we will have an extra parent within the parent set that tells which completion was used for that missing value, according to the completion $\textbf{Z}_{\{X_i\}\cup\textit{PA}_i}$. This idea is applied to every evaluation of the score of a parent set, for every possible completion $\textbf{Z}_{\{X_i\}\cup\textit{PA}_i}$, so the final list of candidates will include only parent sets for which the completion of the data is `known' at the time that the score is computed. In order to ensure that the completions are compatible among different local score computations, the gadgets explained before are enough, since they force that a certain completion be chosen for each missing value.

\begin{theorem}
The exact algorithm transforms the structure learning problem by augmentation into a standard structure learning without missing data in time $O(R\cdot m\cdot C)$, plus time $O(n\cdot k\cdot R^c)$ per parent set evaluation, where $C$ is the total number of missing values and $c$ is the maximum number of missing values appearing in the variable of interest or in variables in the parent set being evaluated (hence polynomial in all parameters but $c$).
\end{theorem}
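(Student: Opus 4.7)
The plan is to prove the theorem in two parts, matching the two time bounds claimed. The first part concerns the construction of the gadgets described in the text (artificial variables with their two-candidate parent sets), and the second part concerns the cost of computing a score for a single candidate parent set in the transformed instance. Correctness of the reduction, i.e.\ that the transformed instance indeed encodes Equation~\eqref{eq1}, is already argued informally before the statement (the large penalty $-\lambda$ and the cyclic pointer structure among artificial variables force exactly one completion to be chosen per missing cell, consistently across all local scores), so I would only briefly recall this to justify that the construction has the claimed form.

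For the gadget-construction bound, I would argue as follows. For each of the $C$ missing cells $(u,i)$, we instantiate $r_i \le R$ artificial variables, each equipped with exactly two candidate parent sets. The $\emptyset$ candidate costs $O(1)$ to write down, and the ``full'' candidate contains at most $m+1$ variables, so writing it takes $O(m)$ time. Multiplying, each missing cell contributes $O(R \cdot m)$ work, yielding $O(R \cdot m \cdot C)$ in total. This part is routine and not the main obstacle.

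The main technical content is the per-parent-set evaluation bound. Here I would fix a variable $X_i$ and a candidate parent set $\textit{PA}_i$ of size at most $k$, and let $c$ be the number of missing cells occurring in the $n \times (|\textit{PA}_i|+1)$ sub-table restricted to $\{X_i\}\cup\textit{PA}_i$. The reduction requires producing one score entry per possible completion $\textbf{Z}_{\{X_i\}\cup\textit{PA}_i}$, each indexed by the extra artificial-variable parents $X_{(u,i),z_{u,j}}$ as in the displayed equation. There are at most $R^c$ such completions, since each missing cell admits at most $R$ values. For any fixed completion, evaluating $s_{\mathcal{D}}(X_i, \textit{PA}_i; \textbf{Z}_{\{X_i\}\cup\textit{PA}_i})$ with a decomposable score such as BDeu reduces to tallying sufficient statistics over the $n$ rows restricted to the $|\textit{PA}_i|+1 \le k+1$ columns of interest, which costs $O(n \cdot k)$ (the logarithmic BDeu terms are absorbed into the constant, as is standard in this literature). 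Multiplying gives the claimed $O(n \cdot k \cdot R^c)$ per parent set.

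The subtle point I would emphasize is the definition of $c$: it is local to the parent set being evaluated, not global, so the exponent does not blow up with the total number $C$ of missing values in the dataset. This is precisely what makes the algorithm polynomial in all parameters except $c$, and it is the only place where the ``parameter'' $c$ enters the bound. I would close by noting that the overall reduction then yields a standard complete-data structure-learning instance whose parent-set list has been precomputed in the stated time, so any off-the-shelf Step (ii) optimizer can be invoked as a black box.
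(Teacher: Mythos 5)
Your proposal is correct and follows essentially the same reasoning the paper gives (the paper does not include a formal proof of this theorem; the argument is embedded in the surrounding text, which notes one gadget of $r_i\le R$ artificial variables with an $O(m)$-size candidate per missing value, giving $O(R\cdot m\cdot C)$, and that each parent-set evaluation is at most $R^c$ times slower because all local completions are enumerated). Your accounting of $O(n\cdot k)$ per completion for the sufficient statistics and your emphasis that $c$ is local to the evaluated parent set are exactly the intended justification.
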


There will be many score computations and entries in the list, exponential in the number of missing values involved. So the benefit of this approach is that usually only a few variables are involved in the score computation at the same time. The drawback is that it cannot handle datasets with many missing values for the same variable, since it is $R^c$ times slower than the corresponding parent set evaluation without missing data. Next we address this issue by proposing an approximate method (the exact method is nevertheless useful in small domains and also  important to check whether the approximate version achieves reasonable results).

\subsection{Approximate Algorithm}

Albeit locally to the variables involved in the evaluation of a parent set, the exact method considers all possible completions of the data. This is fine with a few missing values per variable, but if there are many missing values, in particular within the same variable, the exact method becomes computationally infeasible. We propose an approximate algorithm based on a hill-climbing idea. We start with an initial guess $\Zset_0$ (or several different random guesses) for the completion of all missing values in the dataset. Then we execute the very same steps of the exact algorithm, but we restrict the completions only to those which are at most $t$ elements different from the current guess $\Zset_h$. There are at most $(R\cdot m)^t$ completions $\Zset'_h$ such that $\textrm{HD}(\Zset_h,\Zset'_h)\leq t$. We proceed as with the exact method, but applying such constraint during the transformation that was explained in the previous section. After the transformation is done, the structure optimization is run and a new structure and new data completion $\Zset_{h+1}$ is obtained. We repeat the process until convergence, that is, until $\Zset_{h+1}=\Zset_h$.

\begin{theorem}
The approximate algorithm transforms the structure learning problem by augmentation into a standard structure learning without missing data in time $O(R\cdot m\cdot C)$, plus time $O(n\cdot k\cdot (R\cdot m)^t)$ per parent set evaluation ($C$ is the total number of missing values and $t$ is the amount of locality of the approximation, as previously defined), that is, polynomial in all parameters but $t$.
\end{theorem}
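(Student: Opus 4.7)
My plan is to mirror the proof of Theorem~2, isolating the two places where the approximate variant differs from the exact one and showing that only the per-evaluation cost changes. The overall algorithm reuses the same reduction and gadget construction, so the argument reduces to verifying two things: (a) the gadget-building phase still has cost $O(R\cdot m\cdot C)$, and (b) within each parent set evaluation the number of completions considered, and hence the work done, matches the $O(n\cdot k\cdot (R\cdot m)^t)$ bound.

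For (a) I would simply note that the artificial variables $X_{(u,i),1},\ldots,X_{(u,i),r_i}$ and their two candidate parent sets are introduced exactly as in the exact algorithm, once per missing value and independently of the current guess $\Zset_h$. The forcing argument (exactly one $-\lambda$ choice per gadget to break the induced cycle, the rest consistently selecting a completion for each missing entry) is unchanged, so the gadget phase still runs in $O(R\cdot m\cdot C)$ time.

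For (b) the key change is that during score evaluation for a given $(X_i,\Pai_i)$ we no longer enumerate all completions $\textbf{Z}_{\{X_i\}\cup\Pai_i}$ of the involved missing entries; instead we enumerate only those that differ from $\Zset_h$ in at most $t$ coordinates. I would bound the number of such restricted completions by choosing, for each of the at most $t$ deviations, one of at most $R\cdot m$ \emph{position--value} pairs relevant to the variables in $\{X_i\}\cup\Pai_i$, which recovers the $(R\cdot m)^t$ figure quoted just before the theorem. Each restricted completion fixes all needed entries in the columns of interest, so its local score is computed by a single sweep over the $n$ samples reading at most $k+1$ cells per row, i.e.\ in $O(n\cdot k)$ time. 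Multiplying gives the claimed $O(n\cdot k\cdot (R\cdot m)^t)$ per parent set evaluation, and together with (a) yields the stated overall complexity.

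The main obstacle I anticipate is pinning down the counting in (b) carefully: the global Hamming ball of radius $t$ around $\Zset_h$ over all $C$ missing positions has size on the order of $\binom{C}{t}R^t$, which is much larger than $(R\cdot m)^t$ when $C$ is large. The bound must therefore be interpreted locally, restricted to deviations within the coordinates actually involved in the current parent set evaluation (at most $k+1$ variables, so at most $O(m)$ relevant columns with at most $R$ candidate values each). Once this locality is made precise, the complexity argument is essentially that of Theorem~2 with $R^c$ replaced by $(R\cdot m)^t$, and correctness of the reduction carries over verbatim because the gadgets are untouched and only the candidate list is pruned.
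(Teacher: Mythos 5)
Your proposal matches the paper's own (very brief) justification: the paper gives no separate proof of this theorem, relying only on the observations that the gadget construction is identical to the exact algorithm's (hence the $O(R\cdot m\cdot C)$ term) and that at most $(R\cdot m)^t$ completions within Hamming distance $t$ of $\Zset_h$ need be scored per parent set evaluation, which is exactly your decomposition into (a) and (b). Your closing caveat---that the $(R\cdot m)^t$ count must be read locally, per evaluation over the columns $\{X_i\}\cup\Pai_i$, rather than as the size of the global Hamming ball of radius $t$ around $\Zset_h$---is the correct reading of the paper's bound, so your argument is essentially the paper's, spelled out in more detail.
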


The outcome of the approximate learning algorithm is the network structure as well as the completion of all the missing data values. The approximate algorithm might lead to a locally optimal solution, but on the other hand it is much more scalable than the exact algorithm. 

\begin{theorem}
Provided that an optimal structure learning optimization algorithm is available, the approximate algorithm always converges to a $t$-local optimal solution.
\end{theorem}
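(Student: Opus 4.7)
The plan is to establish two things: that the iteration terminates in finitely many steps, and that the pair $(\Gset_{h+1},\Zset_h)$ returned at termination satisfies the definition of a $t$-local optimum given right before Section 2.1. The three ingredients I would combine are (i) exactness of the inner step, (ii) monotonicity of the score sequence, and (iii) finiteness of the search space.

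First I would observe that, by the hypothesis of the theorem together with the reduction of Theorem 2 adapted to the ball of radius $t$ around $\Zset_h$, each iteration $h\!\to\!h+1$ solves
\[
\max_{\Gset',\,\textrm{HD}(\Zset_h,\Zset')\le t}\; s_D(\Gset',\Zset')
\]
to global optimality. Since $\textrm{HD}(\Zset_h,\Zset_h)=0\le t$, the pair $(\Gset_h,\Zset_h)$ is feasible for this restricted problem, so the returned pair obeys $s_D(\Gset_{h+1},\Zset_{h+1})\ge s_D(\Gset_h,\Zset_h)$. The score sequence is thus monotone non-decreasing.

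Next I would argue termination. The set of DAGs over $\bf X$ and the set of completions $\Zset$ are both finite, so $s_D$ takes only finitely many values; a monotone non-decreasing sequence in a finite set must stabilize. I would then fix a deterministic tie-breaking rule inside the inner optimization that prefers $\Zset_h$ whenever $\Zset_h$ attains the restricted maximum jointly with other completions. Under that rule, once the score plateaus, the optimizer is forced to return $\Zset_{h+1}=\Zset_h$, so the stopping criterion is reached in finitely many iterations. Finally, at termination the inner call has certified that $(\Gset_{h+1},\Zset_h)$ attains the maximum of $s_D(\Gset',\Zset')$ over all $\Gset'$ and all $\Zset'$ with $\textrm{HD}(\Zset_h,\Zset')\le t$, which is verbatim the definition of $t$-local optimality.

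The main obstacle is not the optimality claim at the limit, which is a one-line unfolding of definitions, but ruling out oscillation when strict improvement fails: without some tie-breaking commitment, the hill climber could flip between distinct completions of identical score forever. The cleanest fix is the tie-breaking rule above; an equivalent alternative is to re-state the convergence criterion as ``no improvement in score'' rather than ``$\Zset$ unchanged,'' which makes termination immediate from finiteness and monotonicity but then requires a short extra argument that, at such a no-improvement step, $(\Gset_{h+1},\Zset_h)$ is still $t$-locally optimal, since $(\Gset_h,\Zset_h)$ lies in the feasible set of the inner optimization and ties the maximum.
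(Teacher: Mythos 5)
The paper states this theorem without giving an explicit proof, so there is no official argument to diverge from; your proposal supplies exactly the reasoning the paper leaves implicit, and it is correct: exactness of the inner call on (a superset of) the Hamming ball of radius $t$ around $\Zset_h$, feasibility of $(\Gset_h,\Zset_h)$ yielding a monotone non-decreasing score over a finite space of pairs, and a direct unfolding of the definition of $t$-local optimality at the fixed point. Your tie-breaking remark is a genuine refinement rather than pedantry: with the stopping rule ``$\Zset_{h+1}=\Zset_h$'' taken literally, the climber could in principle oscillate among distinct completions of equal score, so either a tie-breaking preference for $\Zset_h$ inside the inner optimizer or restating convergence as ``no score improvement'' (plus your short check that $(\Gset_{h+1},\Zset_h)$ is then $t$-locally optimal because $(\Gset_h,\Zset_h)$ ties the restricted maximum) is needed to make termination rigorous; the paper glosses over this. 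One further caveat worth making explicit: since the $t$-restriction is applied per parent-set evaluation, the transformed problem solved at each iteration may range over completions whose global Hamming distance from $\Zset_h$ exceeds $t$, so the inner step should be described as solving a relaxation that contains the ball; this only strengthens your inequalities (optimality over a superset implies optimality over the ball), and with that reading both the monotonicity step and the final optimality step go through unchanged.
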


If we want to scale to very large domains, we could also resort to an approximate structure learning optimization algorithm (e.g.~\citep{Scanagatta2015}). In this case, our approximate algorithm could be used in domains with hundreds or even thousands of variables (using very small $t$), but we would lose the guarantee to converge to a $t$-local optimal solution (it would still be a local optimum, but we would have to define it locally also in terms of the graph structures).

\section{Experiments}
We perform experiments on simulated as well as real-world data. The main evaluation metric used is accuracy of the imputation of missing data values, either in the form of missing values spread throughout the data, or in the form of a binary classification problem where only the class variable can contain missing values. Most of our experiments are with binary data for the sake of exposition, even though the algorithms are general and can be used with any categorical data (as shown in the last experimental setting). To test significance, we perform a paired t-test with significance level at $5\%$. Throughout all tables of results, a result in \textbf{bold} refers to an accuracy value that is significantly better than its competitors, whereas showing two results belonging to the same experiment in \textbf{bold} means that each of them being significantly better than the rest of the competitors.
For structure optimization, we use the exact solver referred to as Gobnilp \citep{Bartlett2013} with the code available from \url{https://www.cs.york.ac.uk/aig/sw/gobnilp/}.
We perform comparisons among the two proposed algorithms (exact and approximate) and the structural Expectation-Maximization (EM) algorithm \citep{Friedman1998}. 
We compare accuracy of the three algorithms based on the percentage of correct imputations over all missing values. As for the structural EM, we have used the implementation available at https://github.com/cassiopc/csda-dataimputation~\citep{Rancoita2016}. After convergence, we run the prediction of missing values using a {\it most probable explanation} query. We must emphasize that the task of Bayesian network structure learning with missing values is very challenging, since it is already challenging without missing values. Therefore, we have focused on real but controlled experiments where we can effectively run the algorithms and assert their quality. We use maximum number of parents, $k = 3$, and use $t = 1$.

\subsection{Well-known Bayesian Networks}
We perform experiments using real but small data sets in order to compare both exact and approximate algorithms. First,
we employ the original Bayesian network model for Breast Cancer~\citep{Almeida2014}, which contains $8$ binary variables, we simulate $100$ data instances. That model has been learned from cancer patients of the University of Wisconsin Medical Hospital. Features (Bayesian network nodes) include breast density, mass density, architectural distortion and others, in addition to the diagnosis variable whose binary value refers to benign or malignant~\citep{Dorsi2003}. We include two missing values per variable, resulting in a total of $16$ missing values.
These missing values are generated in a MNAR manner by randomly removing values that are equal to each other, that is, during the generation we enforce that all missing values are zero, or that all missing values are one.
 Imputation results of the proposed exact learning algorithm, approximate algorithm and structural EM are displayed in the first row of Table~\ref{bn-res01} over 100 repetitions of the experiment. 

Second, we use the Bayesian network that has been learned from the Prostate Cancer data by the Tree Augmented Naive Bayes (TAN)~\citep{Friedman1997}, implemented by WEKA \citep{Hall2009}. The Prostate Cancer data were acquired during three different moments in time~\citep{Sarabando20111,Almeida2014}, i.e.\ during a medical appointment, after performing auxiliary exams, and five years after a radical prostatectomy. It contains $11$ binary variables, and $100$ instances are generated. We randomly produce two MNAR missing values per variable, resulting in a total of $22$ missing values. Results are shown in the second row of Table~\ref{bn-res01}. 

Third, the well-known ASIA network is used~\citep{Lauritzen1988}. We generate $100$ instances according to this model, which contains $8$ binary variables. Two missing values are randomly generated according to MNAR. Imputation results are displayed in the third row of Table~\ref{bn-res01}. 
Results indicate that the algorithms proposed here are significantly better than structural EM, which is expected since in this experiment data are not MAR. More interestingly, results of the proposed exact and approximate BN learning algorithms are not significantly different, which supports the use of the (more efficient) approximate method for larger domains. 

\begin{table}[H]
  \caption{Accuracy of imputation for data simulated from different Bayesian networks with two MNAR missing values per variable.}
  \label{bn-res01}
  \centering
  \begin{tabular}{|c|c|c|}
\hline
  Bayesian net & Algorithm  & Average imputation accuracy \\
\hline
   \multirow{ 3}{*} {Breast Cancer} & Exact learning  & \textbf{84.38\%}     \\
   & Approx. learning & \textbf{80\%}     \\
   & Structural EM  & 50\%   \\
\hline
   \multirow{ 3}{*} {Prostate Cancer} & Exact learning  & \textbf{91\%}     \\
   & Approx. learning  & \textbf{86.36\%}     \\
   & Structural EM  & 50\%   \\
\hline
   \multirow{ 3}{*} {ASIA} & Exact learning  & \textbf{84.38\%}     \\
   & Approx. learning  & \textbf{79\%}     \\
   & Structural EM  & 43.75\%   \\
\hline
  \end{tabular}
\end{table}

\subsection{(LUng CAncer Simple set) LUCAS Dataset}\label{sec:lucas}
The LUCAS dataset contains data of the LUCAS causal Bayesian network~\citep{Fogelman2008} with $11$ binary variables, as well as the binary class variable, and
contains $2000$ instances. In this experiment we conduct an analysis of both MAR and MNAR missing data, in order to understand whether the benefits that we
have seen before are only significant in the MNAR case.
Thus, we carry out two experiments: (i) MNAR setting by randomly generating missing values all having the same data value (we repeat that to both zero and one values, one at a time); (ii) \emph{MAR} setting by randomly generating missing values regardless of their respective original values. These simulations are repeated 100 times. 

First, we generate two missing values per variable ($24$ missing values). A comparison between the imputation accuracy values of the approximate algorithm and structural EM is displayed in the first two rows of Table~\ref{bn-res02} (named `Spread All Over'). Surprisingly, our new algorithm is significantly better than structural EM even when missing data are MAR. 

Second, we generate $20$ missing class values and repeat the experiment to span all instances such that each run involves missing values belonging to different instances (without replacement). For the MNAR experiment, each run consists of $20$ identical missing class values (that is, we only make missing values of the same class, and we repeat that for both classes). For the MAR case, there is no such restriction and missing class values are randomly generated. Hence, there are $100$ runs in order to cover all $2000$ instances. Results of the approximate algorithm, structural EM and SVM using different kernels (for the sake of comparison with a state-of-the-art classifier) are displayed in the bottom rows of Table~\ref{bn-res02}. Results of the proposed algorithm are significantly better when MNAR data are used, while the same cannot be stated for the MAR case (accuracy of the proposed algorithm is nevertheless superior to the others in the MAR case).

\begin{table}[H]
  \caption{Accuracy of imputation for experiments performed on the Lung Cancer dataset (LUCAS). \emph{Spread All Over} refers to an imputation of $2$ missing values per variable out of the $12$ LUCAS variables. \emph{Classification} refers to a classification problem performed as a cross-validation (100-fold cross-validation in the MNAR setting case) on LUCAS, using SVM, vs.\ an imputation task on the $20$ missing class variables of the same folds, by both the proposed approximate learning algorithm and Structural\ EM. SVM kernels displayed are those that achieved the highest accuracy in each experiment. MP stands for missingness process, and rbf for radial basis function.}
  \label{bn-res02}
  \centering
  \begin{tabular}{|c|c|c|}
\hline
  MP & Algorithm  & Average imputation accuracy \\
\hline
\multicolumn{3}{|c|}{Exp.: Spread All Over}  \\
\hline
\multirow{2}{*} {MNAR} & Approx. learning & \textbf{70.83\%} \\ \cline{2-3}
& Structural\ EM  & 45\% \\
\cline{1-3}
\multirow{ 2}{*} {MAR} & Approx. learning & 70\% \\ \cline{2-3}
& Structural\ EM & 50\% \\
\hline
\cline{1-3}
\multicolumn{3}{|c|}{Exp.: Classification}  \\
\hline
\multirow{3}{*}{MNAR} &  Approx. learning & \textbf{97.5\%} \\ \cline{2-3}
& Structural\ EM & 42.5\% \\ \cline{2-3}
& SVM (rbf) & 45\% \\
\cline{1-3}
 \multirow{ 3}{*} {MAR} & Approx. learning & $69\%$ \\ \cline{2-3}
& Structural\ EM & 70\% \\ \cline{2-3}
& SVM (rbf) & 55\% \\
\hline
  \end{tabular}
\end{table}

\subsection{SPECT Dataset}
The Single Proton Emission Computed Tomography (SPECT) dataset consists of binary data denoting partial diagnosis from SPECT images \citep{Lichman2013}. Each patient is classified into one of two categories, normal and abnormal. The SPECT data consists of $267$ instances 
and $23$ variables in total ($22$ binary variables and a binary class variable). We generate MNAR missing data with different proportions, always using only one specific value (missing data proportions over all the data are $3\%$, $5\%$ and $10\%$). These randomly generated datasets are given as input to the approximate algorithm as well as to structural EM. We note that there is a large discrepancy in the number of data values holding each of the two binary values: About $67\%$ of the SPECT data has a value $0$, whereas merely $33\%$ of the data has a value $1$. Due to that, we investigate the average MNAR imputation accuracy within each data value separately, and note as well that there is some discrepancy in such accuracy values. Imputation accuracy of the approximate learning algorithm and structural EM are displayed in Table~\ref{bn-res03}. The new algorithm is significantly better.

\begin{table}[ht]
  \caption{MNAR imputation accuracy for the BN Approximate Learning algorithm and Structural EM on the SPECT dataset with various proportions of missing values, and for both data values.}
  \label{bn-res03}
  \centering
  \begin{tabular}{@{}|c|c|c|c|}
\hline
  Missing values \ & Algorithm  & Average imputation accuracy \\
\hline
\multirow{ 2}{*} {$3\%$ (overall)} & New approx. & \textbf{81.75\%} \\ \cline{2-3}
 & Structural\ EM  & 60\% \\
\hline
\multirow{ 2}{*} {$5\%$ (overall)} &New approx. & \textbf{75.22\%} \\ \cline{2-3}
& Structural\ EM  & 49.27\% \\
\hline
\multirow{ 2}{*} {$10\%$ (overall)} & New approx. & \textbf{81.94\%} \\ \cline{2-3}
& Structural\ EM  & 62.04\% \\
\hline
\cline{1-3}
\multirow{ 2}{*} {$3\%$ (missing value = $0$)} & New approx. & \textbf{95.65\%} \\ \cline{2-3}
 & Structural\ EM  & 56.52\% \\
\hline
\multirow{ 2}{*} {$5\%$ (missing value = $0$)} &New approx. & \textbf{80.43\%} \\ \cline{2-3}
& Structural\ EM  & 39.13\% \\
\hline
\multirow{ 2}{*} {$10\%$ (missing value = $0$)} & New approx. & \textbf{92.75\%} \\ \cline{2-3}
& Structural\ EM  & 60.87\% \\
\hline
\cline{1-3}
\multirow{ 2}{*} {$3\%$ (missing value = $1$)} & New approx. & \textbf{67.83\%} \\ \cline{2-3}
 & Structural\ EM  & 63.48\% \\
\hline
\multirow{ 2}{*} {$5\%$ (missing value = $1$)} & New approx. & \textbf{70\%} \\ \cline{2-3}
& Structural\ EM  & 59.4\% \\
\hline
\multirow{ 2}{*} {$10\%$ (missing value = $1$)} & New approx. & \textbf{71.13\%} \\ \cline{2-3}
& Structural\ EM  & 63.2\% \\
\hline
  \end{tabular}
\end{table}

\subsection{Smoking Cessation Study Dataset}
The dataset used in this experiment is taken from a smoking cessation study as described in \cite{Gruder1993}. It has been further utilized in other works, most notably \cite{Hedeker2007}. The smoking cessation dataset is a binary dataset consisting of $489$ patient records (instances) with the missing data being inherently therein, i.e. there is no need to simulate missing data. The dataset contains $4$ variables including the class variable, which refers to \texttt{smoking} or \texttt{non-smoking}. All the missing values are located in the class variable. There is a total of $372$ patient records with observed classes, consisting of $294$ smoking and $78$ non-smoking records, as well as $117$ records with missing class labels. 

The experiment we perform here is a semi-supervised learning (SSL) experiment where we evaluate the performance of the algorithms as follows: (i) We hide the class labels of a portion of the observed labels; (ii) We apply the approximate learning on the data consisting of the originally missing and artificially hidden labels as missing values, and the rest of the data as observed values. Clearly this is a SSL experiment where the training data consists of the records with observed labels as labeled instances, records with originally missing labels as unlabeled instances, and the test instances are the records with artificially hidden labels.

The evaluation metric is the accuracy of the test instances using a cross-validation approach, as usually done in classification experiments. We compare the performance of the approximate algorithm against an equivalent procedure using structural EM (labels are then chosen based on the posterior distribution), and also against a semi-supervised learner in the form of a Laplacian SVM \citep{Melacci2011} whose code is available online. Accuracies of the approximate algorithm, structural EM, and the semi-supervised Laplacian SVM are displayed in Table~\ref{bn-res04}. Results suggest that the new algorithm is a very promising approach for SSL.

\begin{table}[H]
  \caption{MNAR Semi-supervised learning (SSL) results of the Smoking Cessation study data. All test records are Smoking records. The first column refers to the number of missing values in the test set. Accuracy expresses cross-validated accuracy of the test set.}
  \label{bn-res04}
  \centering
  \begin{tabular}{|c|c|c|c|}
\hline
  \# missing values \ & Algorithm  & Avg.\ Accuracy \\
\hline
\multirow{ 2}{*} {25} & Approx. Learning &  \textbf{90\%} \\ \cline {2-3}
& Structural\ EM  & 15\% \\ \cline{2-3}
& Laplacian SVM &  76\% \\
\hline
\multirow{ 2}{*} {50} & Approx. Learning &  \textbf{88\%} \\ \cline {2-3}
& Structural\ EM  & 10\% \\ \cline{2-3}
& Laplacian SVM &  73.5\% \\
\hline
\multirow{ 2}{*} {75} & Approx. Learning &  \textbf{88\%} \\ \cline {2-3}
& Structural\ EM  & 8\% \\ \cline{2-3}
& Laplacian SVM &  76\% \\
\hline
  \end{tabular}
\end{table}

\subsection{Car Evaluation Dataset}
The Car Evaluation dataset \citep{Blake1998,Lichman2013} contains $1728$ instances and $7$ variables consisting of $6$ attributes and a class. The $6$ attributes refer to the following: buying, maintenance, doors, persons, luggage boots and safety. The class variable refers to the car acceptability and can have exactly one of the following values: \texttt{unacceptable, acceptable, good, very good}. All variables are categorical with 3 or 4 states. The data were derived from a hierarchical decision model originally developed by \cite{Bohanec1988}. Similar to the LUCAS experiment, a MNAR classification task is performed by involving missing values belonging all to one category of the class variable at a time (this is repeated for each label). Due to the class label unbalance (\texttt{unacceptable}: 1210 instances, \texttt{acceptable}: 384, \texttt{good}: 69, \texttt{v-good}: 65), we performed $10$ experiments testing only the {\texttt{unacceptable}} and \texttt{acceptable} labels in five each, where there are $100$ randomly chosen instances with a missing label (test set) in each experiment. The proposed algorithm is compared to structural EM and to an SVM classifier. Classification results are displayed in Table~\ref{bn-res05}. Again, the new algorithm is significantly better than the others.
\begin{table}[H]
  \caption{Accuracy of classification for experiments performed on the Car Evaluation dataset. SVM with an rbf kernel is reported since it leads to best accuracy compared to other $5$ experimented kernels.}
  \label{bn-res05}
  \centering
  \begin{tabular}{|c|c|}
\hline
  Algorithm  & Avg.\ Accuracy \\
\hline
Approximate Learning & \textbf{87.5\%} \\
\hline
Structural\ EM  & 69.38\% \\
\hline
SVM (rbf)  & 85.96\% \\
\hline
  \end{tabular}
\end{table}

\section{Conclusions}
In this paper we discuss the Bayesian network structure learning problem with missing data. We present an approach which performs well even when data are not missing at random. We define an optimization task to tackle the problem and propose a new
exact algorithm for it which translates the task into a structure learning problem without missing data. Inspired by the exact procedure, we develop an approximate algorithm which employs structure optimization as a subcall. Experiments show the advantages of such approach. The proposed approximate method can scale to domains with hundreds or even thousands of variables. We intend to investigate such avenue in future work.

\FloatBarrier

\setlength{\bibsep}{2pt plus 0.3ex}

\bibliographystyle{elsarticle-harv}


\bibliography{bn_refs}

\begin{thebibliography}{32}
\expandafter\ifx\csname natexlab\endcsname\relax\def\natexlab#1{#1}\fi
\expandafter\ifx\csname url\endcsname\relax
  \def\url#1{\texttt{#1}}\fi
\expandafter\ifx\csname urlprefix\endcsname\relax\def\urlprefix{URL }\fi

\bibitem[{Almeida et~al.(2014)Almeida, Ferreira, Vinhoza, Dutra, Wu, and
  Burnside}]{Almeida2014}
Almeida, E., Ferreira, P., Vinhoza, T., Dutra, I., Wu, Y., Burnside, E., 2014.
  Expertbayes: Automatically refining manually built {B}ayesian networks.
  Machine Learning and Applications (ICMLA) 13, 362{--}366.

\bibitem[{Bartlett and Cussens(2013)}]{Bartlett2013}
Bartlett, M., Cussens, J., 2013. Advances in {B}ayesian network learning using
  integer programming. Conference on Uncertainty in artificial intelligence
  (UAI) 29, 182{--}191.

\bibitem[{Blake and Merz(1998)}]{Blake1998}
Blake, C., Merz, C., 1998. {UCI} machine learning repository of machine
  learning databases.

\bibitem[{Bohanec and Rajkovic(1988)}]{Bohanec1988}
Bohanec, M., Rajkovic, V., 1988. Knowledge acquisition and explanation for
  multi-attribute decision making. Intl.\ Workshop on Expert Systems and their
  Applications 8, 59{--}78.

\bibitem[{Borchani et~al.(2006)Borchani, Amor, and Mellouli}]{Borchani2006}
Borchani, H., Amor, N.~B., Mellouli, K., 2006. Learning {B}ayesian network
  equivalence classes from incomplete data. Lecture Notes in Comp. Science,
  291{--}295.

\bibitem[{Buntine(1991)}]{Buntine1991}
Buntine, W., 1991. Theory refinement on {B}ayesian networks. Conference on
  Uncertainty in artificial intelligence (UAI) 7, 52{--}60.

\bibitem[{Chickering(1996)}]{Chickering1996}
Chickering, D., 1996. Learning {B}ayesian networks is {NP}-complete. Learning
  from Data, 121{--}130.

\bibitem[{Cooper and Herskovits(1992)}]{Cooper1992}
Cooper, G., Herskovits, E., 1992. A {B}ayesian method for the induction of
  probabilistic networks from data. Machine Learning 9, 309{--}347.

\bibitem[{de~Campos and Ji(2011)}]{Campos2011}
de~Campos, C., Ji, Q., 2011. Efficient structure learning of {B}ayesian
  networks using constraints. Journal of Machine Learning Research (JMLR) 12,
  663{--}689.

\bibitem[{D'Orsi et~al.(2003)D'Orsi, Bassett, Berg, Feig, Jackson, and
  Kopans}]{Dorsi2003}
D'Orsi, C., Bassett, L., Berg, W., Feig, S., Jackson, V., Kopans, D., 2003.
  {BI-RADS}: Mammography. American College of Radiology 4.

\bibitem[{Fogelman-Soulie(2008)}]{Fogelman2008}
Fogelman-Soulie, F., 2008. Mining massive data sets for security: Advances in
  data mining, search, social networks and text mining, and their applications
  to security. Vol.~19. IOS Press.

\bibitem[{Friedman(1998)}]{Friedman1998}
Friedman, N., 1998. The {B}ayesian structural em algorithm. Conference on
  Uncertainty in artificial intelligence (UAI) 14, 129{--}138.

\bibitem[{Friedman et~al.(1997)Friedman, Geiger, and Goldszmidt}]{Friedman1997}
Friedman, N., Geiger, D., Goldszmidt, M., 1997. Bayesian network classifiers.
  Machine Learning 29, 131{--}163.

\bibitem[{Gruder et~al.(1993)Gruder, Mermelstein, Kirkendol, D., Wong,
  Schreckengost, Warnecke, Burzette, and Miller}]{Gruder1993}
Gruder, L., Mermelstein, J., Kirkendol, S., D., D.~H., Wong, C., Schreckengost,
  J., Warnecke, R., Burzette, R., Miller, T., 1993. Effects of social support
  and relapse prevention training as adjuncts to a televised smoking cessation
  intervention. J Consult Clin Psychol 61, 113{--}120.

\bibitem[{Hall et~al.(2009)Hall, Frank, Holmes, Pfahringer, Reutemann, and
  Witten}]{Hall2009}
Hall, M., Frank, E., Holmes, G., Pfahringer, B., Reutemann, P., Witten, I.,
  2009. The {WEKA} data mining software: An update. SIGKDD Explor.\ Newsl. 11,
  10{--}18.

\bibitem[{Heckerman et~al.(1995)Heckerman, Geiger, and
  Chickering}]{Heckerman1995}
Heckerman, D., Geiger, D., Chickering, D., 1995. Learning {B}ayesian networks:
  The combination of knowledge and statistical data. Machine Learning 20,
  197{--}243.

\bibitem[{Hedeker et~al.(2007)Hedeker, Mermelstein, and Demirtas}]{Hedeker2007}
Hedeker, D., Mermelstein, J., Demirtas, H., 2007. Analysis of binary outcomes
  with missing data: missing = smoking, last observation carried forward.
  Addiction 102, 1564{--}1573.

\bibitem[{Koivisto(2006)}]{Koivisto2006b}
Koivisto, M., 2006. Parent assignment is hard for the mdl, aic, and nml costs.
  conference on Learning Theory (COLT) 19, 289{--}303.

\bibitem[{Lauritzen and Speigelhalter(1988)}]{Lauritzen1988}
Lauritzen, S., Speigelhalter, D., 1988. Local computations with probabilities
  on graphical structures and their application to expert systems. Royal
  statistical Society B 50, 157{--}224.

\bibitem[{Leray and Francois(2005)}]{Leray2005}
Leray, P., Francois, O., 2005. Bayesian network structural learning and
  incomplete data. Intl.\ and Interdisc.\ Conf.\ on Adaptive Knowledge Repr.\
  and Reasoning (AKRR), 33{--}40.

\bibitem[{Lichman(2013)}]{Lichman2013}
Lichman, M., 2013. {UCI} machine learning repository.
\newline\urlprefix\url{http://archive.ics.uci.edu/ml}

\bibitem[{Little and Rubin(1987)}]{Little2014}
Little, R., Rubin, D., 1987. Statistical analysis with missing data. John Wiley
  \& Sons.

\bibitem[{Meila and Jordan(1998)}]{Meila1998}
Meila, M., Jordan, M., 1998. Estimating dependency structure as a hidden
  variable. Advances in Neural Information Processing Systems (NIPS),
  584{--}590.

\bibitem[{Melacci and Belkin(2011)}]{Melacci2011}
Melacci, S., Belkin, M., 2011. Laplacian support vector machines trained in the
  primal. Journal of Machine Learning Research (JMLR) 12, 1149{--}1184.

\bibitem[{Mohan et~al.(2013)Mohan, Pearl, and Tian}]{Mohan2013}
Mohan, K., Pearl, J., Tian, J., 2013. Graphical models for inference with
  missing data. Advances in Neural Information Processing Systems (NIPS),
  1277{--}1285.

\bibitem[{Ramoni and Sebastiani(1997)}]{Ramoni1997}
Ramoni, M., Sebastiani, P., 1997. Learning {B}ayesian networks from incomplete
  databases. Conference on Uncertainty in artificial intelligence (UAI) 13,
  401{--}408.

\bibitem[{Rancoita et~al.(2016)Rancoita, Zaffalon, Zucca, Bertoni, and
  de~Campos}]{Rancoita2016}
Rancoita, P., Zaffalon, M., Zucca, E., Bertoni, F., de~Campos, C., 2016.
  Bayesian network data imputation with application to survival tree analysis.
  Computational Statistics \& Data Analysis 93, 373{--}387.

\bibitem[{Riggelsen(2006)}]{Riggelsen2006}
Riggelsen, C., 2006. Learning {B}ayesian networks from incomplete data: An
  efficient method for generating approximate predictive distributions. SDM,
  130{--}140.

\bibitem[{Riggelsen and Feelders(2005)}]{Riggelsen2005}
Riggelsen, C., Feelders, A., 2005. Learning {B}ayesian network models from
  incomplete data using importance sampling. AISTATS, 301{--}308.

\bibitem[{Sarabando(2011)}]{Sarabando20111}
Sarabando, A., 2011. Um estudo do comportamento de redes {B}ayesianas no
  prognóstico da sobrevivencia no cancro da prostata. M.Sc.\ thesis,
  Universidade do Porto 29, 131{--}163.

\bibitem[{Scanagatta et~al.(2015)Scanagatta, de~Campos, Corani, and
  Zaffalon}]{Scanagatta2015}
Scanagatta, M., de~Campos, C., Corani, G., Zaffalon, M., 2015. Learning
  {B}ayesian networks with thousands of variables. Advances in Neural
  Information Processing Systems (NIPS), 1855{--}1863.

\bibitem[{Van~den Broeck et~al.(2014)Van~den Broeck, Mohan, Choi, and
  Pearl}]{broeck14}
Van~den Broeck, G., Mohan, K., Choi, A., Pearl, J., 2014. Efficient algorithms
  for {B}ayesian network parameter learning from incomplete data. In: Causal
  Modeling and Machine Learning Workshop at ICML-2014. pp. R--441.

\end{thebibliography}

\end{document}